\documentclass[a4paper]{article}

\usepackage[a4paper,top=3cm,bottom=2cm,left=3cm,right=3cm,marginparwidth=1.75cm]{geometry}
\usepackage[T1]{fontenc}   \usepackage{algorithm}
\usepackage{algorithmic}
\usepackage{microtype}
\usepackage{amsmath,amssymb,latexsym,epsfig,amsthm}
\usepackage{graphicx}
\usepackage{color}
\usepackage{caption}
\usepackage{subcaption}
\usepackage{url}
\usepackage{paralist}
\usepackage{enumitem}
\usepackage{booktabs} \usepackage{natbib}
\usepackage{mathtools}
\usepackage{tabularx}
\usepackage{textcomp}
\usepackage{authblk}

\usepackage{hyperref}
\hypersetup{
    colorlinks=true,
    citecolor=blue
}

\DeclarePairedDelimiterX{\infdivx}[2]{(}{)}{#1\;\delimsize\|\;#2}
\newcommand{\infdiv}{\cD\infdivx}
\newcommand{\infdivcvar}{\cD_{\text{CVaR}}\infdivx}
\newcommand{\divcvar}{\cD_{\text{CVaR}}}

\global\long\def\reals{\mathbf{R}}

\global\long\def\E{\mathbb{E}}

\global\long\def\U{\mathcal{U}}
\global\long\def\Udiv{\mathcal{U}^{\text{div}}}
\global\long\def\Uint{\mathcal{U}^{\text{div}\cap\text{intv}}}
\global\long\def\Uintv{\mathcal{U}^{\text{intv}}}

\global\long\def\R{\mathcal{R}}
\global\long\def\V{V}
\global\long\def\X{X}

\global\long\def\ouralg{\texttt{RISe}}
\global\long\def\divalg{\texttt{JointDRO}}
\global\long\def\intvalg{\texttt{Causal}}
\global\long\def\erm{\texttt{ERM}}
\global\long\def\stdalg{\texttt{Standard}}

\DeclareMathOperator*{\argmin}{arg\,min}

% https://tex.stackexchange.com/questions/439768/put-reference-above-equal-sign-and-refer-to-it
\newcommand\nonumberthis{\nonumber\refstepcounter{equation}}

\newtheorem{assumption}{Assumption}
\newtheorem{proposition}{Proposition}

\def\to{{\,\rightarrow\,}}

\mathchardef\mhyphen="2D

\newcommand{\vertiii}[1]{{\left\vert\kern-0.25ex\left\vert\kern-0.25ex\left\vert #1
    \right\vert\kern-0.25ex\right\vert\kern-0.25ex\right\vert}}

\def\cA{\mathcal{A}}

\def\cD{\mathcal{D}}

\def\cN{\mathcal{N}}

\def\cR{\mathcal{R}}
\def\cS{\mathcal{S}}

\def\cU{\mathcal{U}}

\begin{document}

\title{Learning Under Adversarial and Interventional Shifts}

\author[1]{Harvineet Singh\footnote{Work done as a summer fellow at the Center for Research on Computation and Society, Harvard University. Correspondence at \texttt{hs3673@nyu.edu}}}
\author[2]{Shalmali Joshi}
\author[2]{Finale Doshi-Velez}
\author[2]{Himabindu Lakkaraju}
\affil[1]{New York University}
\affil[2]{Harvard University}

\date{}
\maketitle

\begin{abstract}
Machine learning models are often trained on data from one distribution and deployed on others. So it becomes important to design models that are robust to distribution shifts. Most of the existing work focuses on optimizing for either adversarial shifts or interventional shifts. Adversarial methods lack expressivity in representing plausible shifts as they consider shifts to joint distributions in the data. Interventional methods allow more expressivity but provide robustness to unbounded shifts, resulting in overly conservative models. In this work, we combine the complementary strengths of the two approaches and propose a new formulation, RISe, for designing robust models against a set of distribution shifts that are at the intersection of adversarial and interventional shifts. We employ the distributionally robust optimization framework to optimize the resulting objective in both supervised and reinforcement learning settings. Extensive experimentation with synthetic and real world datasets from healthcare demonstrate the efficacy of the proposed approach.
\end{abstract}

\section{Introduction}

As machine learning (ML) models are increasingly being deployed in high-stakes applications, it is important to ensure the safety of these models. One way to ensure this desirable property is to train models robust to distribution shifts. This is particularly important for deployments in consequential decision-making such as in healthcare and criminal justice. Guaranteeing good predictive performance on (unseen) test distributions requires departing from choosing the best model for the train distribution. Two broad principles have been proposed for learning robust models \textit{without} access to samples from test distribution -- adversarial training~\citep{sinha2018certifying, staib2017distributionally, madry2018towards} and causal invariance~\citep{subbaswamy2019preventing, arjovsky2019invariant, pearl2011transportability}. Interestingly, recent work has shown correspondence between the two within the framework of \textit{distributionally robust optimization} (DRO) \citep{buhlmann2020, rojas2018invariant}.

In DRO based learning, the goal is to optimize a \textit{worst-case} risk over a set of distributions defined to be \textit{close} in some metric to a nominal distribution (typically, the train distribution). 
The set is referred to as the uncertainty set as it encodes our uncertainty about the test distribution. Under a large class of divergence metrics for defining such sets, one can give rigorous generalization guarantees owing to the worst-case nature of the optimization \citep{duchi2018learning, staib2019distributionally}. At the same time, if the uncertainty set is too large, the DRO solution can be too pessimistic. That is, minimizing for the worst-case risk will result in higher risk on the test distributions we actually might encounter in practice. 
For example, considering perturbations in images like adversarial noise or rotations can result in large sets which do not correspond to `realistic' shifts \citep{taori2020measuring}. Depending on the type of models and available sample sizes, robust learning can significantly degrade performance~\citep{raghunathan2020understanding}.

On the other hand, methods that employ causal invariance are highly expressive, allowing domain experts to precisely pinpoint specific shifts one may encounter in practice, thereby making them more realistic. However, even under the specified shifts, causal methods
result in models that are robust to unbounded shifts, a scenario unlikely to be encountered in practice.
Hence, the balance between utility (average-case performance) and robustness (worst-case performance) of the models depends critically on the framework, and thereby, on the definition of the uncertainty sets.
Prior works consider uncertainty sets either using divergence metrics or interventions on causal models, each of which suffer from above mentioned limitations.  
We are motivated by the insight that  carefully considering the cases where individual approaches to robust learning may be highly conservative or lack expressivity can help improve utility for realistic shifts. We propose to combine the two approaches to be able to express more realistic and bounded shifts, and thus, train less conservative methods. 

\paragraph{Our contributions.} 
\begin{asparaitem}
\item We consider the intersection of adversarial and causal robustness frameworks as a natural way to specify plausible uncertainty sets. These novel uncertainty sets characterize bounded interventions on causal models of the domain. We refer to our formulation as $\ouralg$ which stands for \underline{R}obustness with \underline{I}ntersection \underline{Se}ts.
\item We give an efficient procedure for solving the new DRO problem for such sets under certain conditions.
\item The procedure is applied to problems from supervised learning, contextual bandits (CB), and reinforcement learning (RL) on synthetic and real-world datasets. For CB and RL, we provide a novel formulation for robust off-policy evaluation (OPE) based on the proposed uncertainty sets.  
\end{asparaitem}
In summary, we provide a novel perspective to the problem of robust learning that bridges existing adversarial learning and causal inference methods.

\section{Related Work}
\paragraph{Generalization in ML.} Improving generalizability (to unseen test domains) of ML models has been studied using multiple principles. Adversarial training~\citep{madry2018towards,sinha2018certifying,staib2017distributionally} improves robustness by training models that are invariant to perturbations of training samples within a norm ball. Approaches based on causal invariance train predictors robust to unbounded shifts specified as interventions in (parts of) the underlying data generating mechanism.~\citet{subbaswamy2019preventing} identify stable estimators when such causal knowledge is available. When such domain knowledge is unavailable, causal methods seek invariant subsets (e.g. causal parents of the target variable) using multiple training environments~\citep{xie2020risk, krueger2020out, rosenfeld2020risks, koyama2020out}. There are several drawbacks of relying on one or the other framework. In contrast, we take a more practical approach to describing bounded shifts expressed as interventions in the causal model of the domain.

\paragraph{Distributionally Robust Optimization.} 
The primary mechanism of DRO is to specify uncertainty sets that encode the uncertainty about potential test shifts. These uncertainty sets can be defined over the joint distribution of the data~\citep{ben2013robust, bertsimas2018data, blanchet2019quantifying,duchi2018learning} or the marginal distribution of a subset of features~\citep{duchi2019distributionally}, and are particularly well explored for supervised learning. Interestingly, adversarial training can also be understood as solving DRO with a Wasserstein metric-based set \citep{staib2017distributionally}.
Applications of DRO have been explored in contextual bandits for policy learning~\citep{si2020distributional,mo2020learning,faury2020distributionally} and evaluation~\citep{kato2020off,jeong2020robust}. Uncertainty sets based on KL-divergence, $L_1,L_2$ and $L_\infty$ norms have been studied~\citep{nilim2005robust, iyengar2005robust} in robust RL. Such methods may lead to conservative value estimates if the norm ball-based uncertainty sets form a much larger set than the set of actual test distributions. Some frameworks aim to restrict the sets by iteratively refining them with newly observed data \citep{petrik2019beyond}, however, the sets are still constructed using $L_1$ norm balls for reasons of tractability of the optimization problem. In contrast, we use domain knowledge elicited with a causal graph to restrict the sets and leverage metrics, such as $f$-divergences, that allow us to maintain tractability and favourable statistical properties for the resulting optimization problem 
~\citep{duchi2018learning, shapiro2014lectures}. 

\paragraph{Causal Robustness.}
Causal approaches provide robustness to arbitrarily strong distributional shifts, as shown in recent works \citep{rojas2018invariant, subbaswamy2019preventing, magliacane2018domain, rothenhausler2018anchor, peters2016causal}. A relaxation to bounded shifts, caused by interventions that change the mean of the variables, has been proposed by \citet{rothenhausler2018anchor} for supervised learning in the special case of linear Gaussian models. 
Extensions to non-linear settings, but still within the restrictive class of additive shifts, have been explored ~\citep{christiansen2020causal}.
In contrast, our proposed sets are defined without parametric assumptions on shifts or variables.
\citet{subbaswamy2019universal} provide a hierarchy of distribution shifts specified on the causal graph. Further, they show the trade-off between utility and robustness while mitigating these shifts in the causal framework. However, they do not explore the combination of intervention and divergence-based shifts as a way to make this trade-off.

\section{Preliminaries}

\begin{figure}[htbp!]
\centering
    \begin{subfigure}[b]{0.4\textwidth}
    \centering
        \includegraphics[scale=0.6]{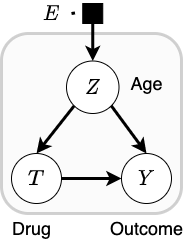}
        \caption{}
\end{subfigure} \begin{subfigure}[b]{0.4\textwidth}
    \centering
\includegraphics[width=\textwidth]{{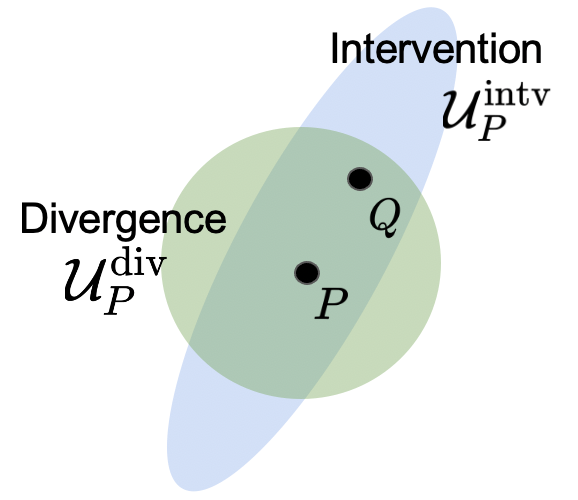}}
    \caption{}
\end{subfigure}
  \caption{(a) Example causal graph: Drugs are prescribed based on age and have different outcomes by age. Selection node $E$ (for environment) specifies a plausible distribution shift (posited by a domain expert), denoting that age distribution or $P(Z)$ may change across environments. (b) \ouralg: Intersection of the two uncertainty sets that reduces the size of the divergence-based set while still remaining within the class of plausible shifts given by the intervention-based set.}
  \label{fig:fig1}
\end{figure}

\textbf{Notation.} 
$\V$ denotes all observed variables in the domain. Features are denoted by $\X$ and outcome variable by $Y$. Train and test distributions over $\V$ are denoted by $P$ and $Q$ respectively (using the same notation for their densities). An uncertainty set w.r.t. a distribution $P$ is denoted by $\U_P$.

\textbf{Data generating process.} A causal graph $\mathcal{G}$ is a directed acyclic graph.
Each node is a variable of interest (here $\V$) with directed edges representing causal dependencies. Selection diagrams \citep{pearl2011transportability} augment the causal graph to represent plausible shifts across environments. It annotates the graph with additional nodes (called selection nodes) pointing to variables whose mechanisms may differ across train and test environments. Consider the example in Figure \ref{fig:fig1}(a) with two features $(Z,T)$ age and prescribed drug. The edge from selection node $E\rightarrow Z$ indicates that the age distribution shifts across environments (say, countries). Parents of a node $Y$ are denoted by $pa(Y)$. Interventions on a node, $do(Z)$, artificially manipulate the underlying mechanism generating $Z$~\citep{pearl2009causality}. 
For instance, a soft intervention, $do(Z \sim \nu(Z))$, manipulates the \textit{distribution} of $Z$ from $P(Z)$ to $\nu(Z)$. In the selection diagram, (soft) interventions on children of $E$ (denoted $ch(E)$) gives us all environments that confirm with the underlying data generating process, hence are plausible or realistic. 

\textbf{DRO objective.} In DRO, we want to find a decision variable $\theta$ (e.g. regression parameters) that minimizes the \textit{robust risk}, defined as worst-case loss across the uncertainty set,
\begin{equation}
\theta^*\in \argmin_{\theta\in \Theta} \left\{\R(\theta, \U_P) := \sup_{Q\in\ \U_P}\ \E_{\V\sim Q}[\ell(\theta, \V)]\right\}
     \label{eq:robust}
\end{equation}
where $\ell(\theta, \V)$ is a given loss function. This problem can be seen as a game between the min player, choosing $\theta$ to minimize expected loss for the test distribution $Q$, and the max player trying to find the worst $Q$ from the set $\U_P$.

\textbf{Divergence-based sets.} Here, the uncertainty set $\U_P$ is all distributions lying in a $\delta$-ball around the train distribution, defined with respect to a divergence metric $\cD$ as,
\begin{equation}
\label{eq:distance_set}
    \mathcal{U}^\text{div}_{P} := \{Q\ll P\ \text{s.t.}\ \infdiv{Q}{P}\leq \delta\}
\end{equation}
where $Q{\ll} P$ denotes absolute continuity i.e. $P(\V)=0$ implies $Q(\V)=0$.
An example of divergence-based sets are the ones based on $\infdivcvar{Q}{P} = \log{\sup_{X\in \texttt{dom}(X)}} \frac{Q(X)}{P(X)}$ where CVaR stands for Conditional Value at Risk \citep{rockafellar2000optimization}. 

\citet{duchi2019distributionally} consider learning robust classifiers under the assumption that only the covariate distribution $P(X)$ may change at test time and the conditional distribution $P(Y\vert X)$ remains the same. For this setup, they define uncertainty sets using $\divcvar$ and provide estimators for solving the robust risk.
We will leverage these estimators for solving DRO problems arising in our formulation.

\paragraph{Intervention-based sets.} Using causal knowledge provides an alternate approach to distributional robustness.
Interestingly, the causal solution can be shown to be the solution to a DRO problem with sets defined by arbitrary interventions on $\X$ \citep[Thm. 4]{rojas2018invariant},
\begin{equation}
\label{eq:causal_set}
\Uintv_P := \{Q\ll P\ \text{s.t.}\ Q=P(V\mid do(X\sim \nu(X)))\}
\end{equation}
The solution to Eq. (\ref{eq:robust}) with $\Uintv_P$ is to use the parents of $Y$ as features for learning the robust predictor \citep{buhlmann2020}. That is, for the squared loss, the DRO solution is the Bayes predictor $\E[Y\vert pa(Y)]$.\footnote{The result requires that the mechanism between $X$ and $Y$ does not change due to the intervention on $X$ \citep{buhlmann2020}.}
An important distinction from the divergence-based sets is allowing for \textit{arbitrary} shifts in $P(X)$ instead of a bounded shift in some metric.
We now demonstrate our approach to specify plausible shifts by considering the intersection of both types of sets.

\section{Our Approach -- Intersection of Divergence and Intervention-based Sets}
We propose to improve utility of robust methods by leveraging both the approaches such that we can specify realistic shifts using interventions (as opposed to divergences) with bounded magnitude (as opposed to arbitrary interventions). Defining the uncertainty set this way may help the modeller to achieve a better trade-off between utility and robustness.

In practice, the nature of distribution shifts across domains can be highly systematic. For instance, in our example, the age distribution of diabetics can shift systematically from a large city hospital to a small rural hospital. 
If the divergence based uncertainty sets considered are large (and over the whole feature set as opposed to specific features such as age in this case), the optimization problem in Eq.(\ref{eq:robust}) can result in degenerate solutions  \citep{hu2018does}.
Training a model that allows for arbitrary interventional shifts in the age distribution can further minimize utility of our models. Specifically, when we expect the train and test distributions to not vary much, then the robust loss under the intervention-based set overestimates the plausible loss that we will observe at test time. The advantage of the worst-case guarantee for the causal solution is further limited to cases where we can find the Bayes predictor in the considered function class with reasonable sample sizes. In the experiments, we illustrate how 
model misspecification can invalidate this approach's attractive guarantees to arbitrary shifts.

We demonstrate that considering intersections of the uncertainty sets associated with the two approaches can help describe realistic shifts and thus mitigate their individual drawbacks. 
Let $Z \subseteq \X$ be the subset of variables that shift across domains. In the selection diagram, $Z$ are the children of the selection node or $ch(E)$. The intersection of the uncertainty sets $\cU^{\text{div}}_P$ and $\Uintv_P$ contains all distributions resulting from interventions on selection nodes $Z$ which are bounded in some metric $\cD$. That is, \begin{equation}
\label{eq:int_set}
\begin{aligned}
    \Uint_P := \{&Q\ll P\ \text{s.t.}\ Q=P(V\mid do(Z\sim \nu(Z))), \\ &\infdiv{\nu(Z)}{P(Z)} \leq \delta, Z:=ch(E)\}.
\end{aligned}
\end{equation}

This definition in Eq. (\ref{eq:int_set}) limits the arbitrary shifts in Eq. (\ref{eq:causal_set}) by requiring bounded shifts. Thus, Eq. (\ref{eq:int_set}) is the intersection of sets (\ref{eq:distance_set}) and (\ref{eq:causal_set}), which reduces the size of the uncertainty set.
The notation assumes no other variable causes $Z$ for conciseness. If $Z$ has parent nodes, then the divergence $\cD$ can be defined for the conditional distribution of $Z\vert pa(Z)$ assuming its existence. 

\paragraph{Optimizing with intersection sets.} The optimization problem with intersection of uncertainty sets is the DRO problem of Eq.(\ref{eq:robust}) minimizing the worst-case risk, $\R(\theta, \Uint_P)$, which is now also a function of the \textit{interventional} distributions. Thus, identification of the causal estimand, worst-case risk in this case, requires making assumptions about the augmented causal graph. To make the problem tractable, we make the following assumptions.

\begin{assumption}
\label{assum:inter}
For the causal graph $\mathcal{G}$ with the selection node $E$, assume the following holds.
\begin{enumerate}[label=(\alph*), leftmargin=*]
    \item $\mathcal{G}$ is a directed acyclic graph with no unmeasured confounding, i.e. $P(V){=}\prod_{O\in V} P(O\vert pa(O))$,
    \item $ch(E)$ have no parents except $E$, i.e. shifts only occur in variables with no parents.
\end{enumerate}
\end{assumption}

While Assumption \ref{assum:inter}\emph{(a)} is a statement about thoroughness of the measured variables and is frequently made in domains such as epidemiology \citep{hernan2020causal}, Assumption \ref{assum:inter}(b) stems from the tractability of solving the minmax problem in (\ref{eq:robust}). While, evaluating risk under bounded distributional shifts in conditional distribution have been recently addressed in~\citet{subbaswamy2020evaluating}, we defer addressing empirical evaluation for bounded conditional interventional shifts to future work. Our assumptions still cover many important shifts as demonstrated in experiments. The following result allows us to efficiently solve the DRO problem over the specified intersection set.

\begin{proposition}\label{sec:prop1}
Given Assumption \ref{assum:inter} holds, the uncertainty set for the intersection (\ref{eq:int_set}) is equivalent to,
\begin{equation}
\label{eq:marginal_set}
\begin{aligned}
    \Uint_P = \{&Q\ll P\ \text{s.t.}\ Q=\nu(Z)P(V\setminus Z\mid Z), \\ &\infdiv{\nu(Z)}{P(Z)}\leq \delta, Z:=ch(E)\}
\end{aligned}
\end{equation}
\end{proposition}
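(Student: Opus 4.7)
The plan is to reduce the interventional distribution appearing in (\ref{eq:int_set}) to a simple product form via Pearl's truncated factorization (the $g$-formula) and then recognize the non-intervened factor as the conditional $P(V\setminus Z\mid Z)$. Under Assumption \ref{assum:inter}\emph{(a)}, the joint factorizes as $P(V)=\prod_{O\in V}P(O\mid pa(O))$, so I would split the product into the $Z$-factors and the rest: $P(V)=\bigl[\prod_{O\in Z}P(O\mid pa(O))\bigr]\bigl[\prod_{O\in V\setminus Z}P(O\mid pa(O))\bigr]$. Assumption \ref{assum:inter}\emph{(b)} lets me simplify the first bracket to $\prod_{O\in Z}P(O)$, since each $O\in Z$ has no observational parents. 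In particular, the elements of $Z$ are mutually marginally independent, so $P(Z)=\prod_{O\in Z}P(O)$.

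Next, I would apply the truncated factorization for a stochastic intervention $do(Z\sim\nu(Z))$: the mechanisms for variables in $Z$ are replaced by the new density $\nu(Z)$, while the mechanisms for variables in $V\setminus Z$ are untouched (this uses Assumption \ref{assum:inter}\emph{(b)}, which guarantees that no non-$Z$ CPD $P(O\mid pa(O))$ depends on the mechanism of $Z$ other than through the realized value of $Z$). Thus
\begin{equation*}
P(V\mid do(Z\sim\nu(Z)))=\nu(Z)\prod_{O\in V\setminus Z}P(O\mid pa(O)).
\end{equation*}
The remaining step is to identify $\prod_{O\in V\setminus Z}P(O\mid pa(O))$ with $P(V\setminus Z\mid Z)$. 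Marginalizing the full DAG factorization over $V\setminus Z$ and using $\sum_{V\setminus Z}\prod_{O\in V\setminus Z}P(O\mid pa(O))=1$ (a standard fact about a set of CPDs on a topologically ordered set of nodes whose parents lie in $Z\cup(V\setminus Z)$) gives $P(Z)=\prod_{O\in Z}P(O)$, matching the computation above, so $\prod_{O\in V\setminus Z}P(O\mid pa(O))=P(V)/P(Z)=P(V\setminus Z\mid Z)$ on the support of $P(Z)$. Substituting this into the previous display yields $Q=\nu(Z)P(V\setminus Z\mid Z)$, which is exactly the parametrization in (\ref{eq:marginal_set}). The divergence bound on $\nu(Z)$ relative to $P(Z)$ carries over unchanged between the two descriptions, establishing set equality in both directions.

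The last thing I would check is the absolute continuity constraint $Q\ll P$. This reduces to showing $\nu(Z)\ll P(Z)$, which is already implicit in the divergence ball since finite $f$-divergences (including $\divcvar$) require $\nu\ll P(Z)$; conversely, any $\nu\ll P(Z)$ yields $Q=\nu(Z)P(V\setminus Z\mid Z)\ll P(Z)P(V\setminus Z\mid Z)=P(V)$. The main obstacle I anticipate is a clean invocation of the stochastic-intervention version of the $g$-formula when $Z$ contains multiple variables: one must verify that replacing the joint mechanism of $Z$ by the joint density $\nu(Z)$ is consistent with the node-wise DAG semantics, which is exactly where Assumption \ref{assum:inter}\emph{(b)} is needed --- without it, $\nu(Z)$ would in general have to be a product over conditionals $\nu(Z_i\mid pa(Z_i))$ rather than a free joint, and the reduction to the marginal swap in (\ref{eq:marginal_set}) would fail.
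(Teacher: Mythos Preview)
Your proposal is correct and follows essentially the same approach as the paper: factorize via Assumption~\ref{assum:inter}\emph{(a)}, use Assumption~\ref{assum:inter}\emph{(b)} to isolate $P(Z)$ as a parent-free marginal, replace it by $\nu(Z)$ under the soft intervention, and recognize the remaining product of CPDs as $P(V\setminus Z\mid Z)$. The paper's proof is a two-sentence sketch that simply asserts these steps; your version is more careful in justifying the identification $\prod_{O\in V\setminus Z}P(O\mid pa(O))=P(V\setminus Z\mid Z)$ and in checking the absolute-continuity and multi-variable-$Z$ issues, none of which the paper addresses explicitly.
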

That is, the proposed intersection-based uncertainty set consists of distributions with bounded shifts in the marginals of some feature set $Z$.  The proof is a straight-forward application of factorization from Assumption \ref{assum:inter}(a), followed by replacing factors $P(Z)$ with the intervened distribution $\nu(Z)$ by the definition of the $do$-intervention. The proof is provided in Appendix~\ref{app:prop1}.

Next, we apply $\ouralg$ in three settings: \begin{inparaenum} \item[i)] Supervised Learning, \item[ii)] OPE in Contextual Bandits and \item[iii)] OPE in MDPs. \end{inparaenum}
\subsection{Robust Supervised Learning with \ouralg.}
\label{sec:superlearn_intersect}
Here, $\V{:=}(\X,Y)$ are i.i.d. feature and outcome pairs, and $\theta$ is the prediction function.
We want to solve the minmax problem (\ref{eq:robust}) with the intersection set $\Uint_P$.
Using Proposition~\ref{sec:prop1}, $\Uint_P$ is equivalent to
the bounded shifts in subset $Z=ch(E)$, Eq. (\ref{eq:marginal_set}). As this expression does not contain any interventional (causal) terms, the loss under intersection sets is identified using observations from the train distribution alone. 

Choosing $\cD=\divcvar$ in Eq. (\ref{eq:marginal_set}), the minmax problem is the same as that considered by \citet{duchi2019distributionally}. Thus, we use their  marginal DRO framework which upper bounds the worst-case risk
using convex duality and assuming smoothness of the risk conditioned on $Z$.
In essence, the problem reduces to computing the worst risk among all subpopulations of the training set of at least a certain size, controlled by $\delta$ (see Appendix~\ref{app:irslset} for a justification).
We note that the marginal DRO framework \citep{duchi2019distributionally} solves a \emph{specific instance} of the (bounded) interventional shift, i.e. with $\divcvar$, and only for supervised learning. In the next two sections, we study OPE with intersection-based uncertainty sets which presents a novel extension of this framework.

\subsection{Robust OPE in Contextual Bandits}
\label{sec:offpolicy}
Evaluating a policy from batch data under shifts is critical for applications like healthcare where the new policies need to be vetted before deployment \citep{li2020optimizing}. 
Here we have access to $n$ tuples $\{(Z_i, T_i, Y_i)\}_i$ collected with a known stochastic policy that applies treatment $T_i$ in context $Z_i$ and observes the corresponding outcome $Y_i$. 
Further, assume that the tuples are i.i.d. as in the contextual bandit setup, where the joint distribution factorizes as {\small{$\prod_i P(Z_i)P(T_i\vert Z_i)P(Y_i|Z_i,T_i)$}}.

Given data sampled from $P$, the goal in off-policy evaluation (OPE) is to evaluate the expected outcome $\E_Q[Y]$ under a distribution $Q$ induced by following a new policy in the \textit{same} environment \citep{dudik2014doubly, thomas2016data}. Importantly, the difference between the two distributions is assumed to be only due to different policies i.e. $P(T\vert Z){\neq} Q(T\vert Z)$ and the rest of the environment-related factors are the same. That is, in terms of the causal graph for this setup, shown in Figure \ref{fig:fig1}(a), only shift interventions on $T$ are considered \citep{zhang2017transfer}.

\paragraph{Robust OPE in CBs with \ouralg.}
Departing from this assumption, we evaluate a policy under a \textit{new} environment characterized by unknown and bounded interventions on $Z$. Thus, we define the uncertainty set $\cU^{\text{CB}}_P$ by
shifts in the marginal distribution of $Z$:
\begin{equation}
\begin{aligned}
    \U^{\text{CB}}_P = \{&Q\ll P\ \text{s.t.}\ Q=\nu(Z)Q(T\vert Z)P(Y\vert Z, T), \\ &
    \infdiv{\nu(Z)}{P(Z)}\leq \delta\}
\end{aligned}
\end{equation}
In the robust OPE problem, the aim is to find the \textit{worst-case} average outcome under $\U^{\text{CB}}_P$ instead of just the average,
\begin{equation}
\label{eq:robust_eval}
\R(\cU^{\text{CB}}_P) = \inf_{Q\in\ \cU^{\text{CB}}_P}\ \E_{(Z, T, Y)\sim Q}[Y],
\end{equation}
where $Q(T\vert Z)$ is the policy to be evaluated and is considered to be known and fixed. Consider each distribution in the set $\cU^{\text{CB}}_P$, $Q(\cdot){=}\nu(Z)Q(T\vert Z)P(Y\vert Z, T)$, which differs from the train distribution in the factors for $Z$ and $T\vert Z$. 
\ifdefined\gobble \vspace{-.5cm}\fi
\begin{align}\label{eq:robust_ope_cb}
\R(\U^\text{CB}_P) &= \inf_{Q\in \U^\text{CB}_P}\ \E_{Z\sim Q(Z)}\E_{P(Y|T,Z)Q(T | Z)}[Y] \\ 
    &\stackrel{(\ref{eq:robust_ope_cb}a)}{=} \inf_{Q \in \U^\text{CB}_P} \E_{Z\sim Q(Z)}\E_{P(Y|T,Z)P(T | Z)}\left[\frac{Q(T|Z)}{P(T|Z)}Y\right] \nonumberthis \\
    &\stackrel{(\ref{eq:robust_ope_cb}b)}{=}\inf_{\eta\in \reals} \frac{1}{\delta}\E_{Z \sim P(Z)}\left[(\E\left[W{\times} Y\vert Z\right]-\eta)_+\right] + \eta \nonumberthis
\end{align}
To solve Eq. (\ref{eq:robust_eval}) for this $Q$, we first use importance sampling to account for the change in $T\vert Z$ due to the known policy $Q(T\vert Z)$, Eq.~(\ref{eq:robust_ope_cb}a). As a result, the set $\U^{\text{CB}}_P$ now consists of shifts on $Z$ alone, which is the same as the set $\Uint_P$ in Eq. (\ref{eq:marginal_set}). Thus, the Robust OPE problem reduces to solving $\sup_{Q\in\ \Uint_P}\ \E_{\V\sim Q}[W{\times} Y]$ where $W$ are the importance sampling weights, $W(T, Z){=}Q(T\vert Z)/P(T\vert Z)$. 
Using similar convex duality arguments as in Appendix~\ref{app:rslduchi} and~\citet{shapiro2014lectures}, and mapping the problem to that of marginal subpopulation shifts, as done in the supervised learning case, we have Eq.~(\ref{eq:robust_ope_cb}b). 
The full optimization procedure is summarized in Algorithm~\ref{alg:cb_ope} in Appendix~\ref{app:cb_ope}.

\subsection{Robust OPE in MDPs}
\label{sec:fullmdp}
We now consider shifts in transition dynamics across train-test environments which can invalidate OPE methods for MDPs as they typically assume stationary dynamics. 
\begin{figure}[htbp!]
\centering
    \begin{subfigure}[b]{0.6\linewidth}
        \includegraphics[scale=0.45]{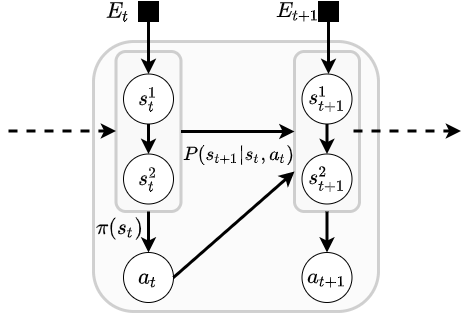}
        \caption{}
    \end{subfigure}\begin{subfigure}[b]{0.3\linewidth}
        \includegraphics[width=0.65\textwidth]{{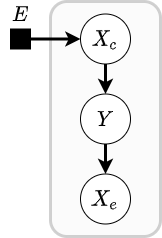}}
        \caption{}
    \end{subfigure}
\caption{(a) \textbf{Causal graph for MDP.}
    Based on the selection node $E$, only $P(s^1\vert s,a)$ may change from one environment to another. (b) \textbf{Causal graph for supervised learning.} Only $P(X_c)$ changes in test environments.}
    \label{fig:sel_mdp}
\end{figure}
Assume a finite state, finite action infinite-horizon MDP with discount factor $\gamma\in [0,1)$, states {$s {\in} \cS$}, actions {$a {\in} \cA$}, reward model {$r(s,a)$}, and transition model {$P(\cdot\vert s,a)$}. For simplicity, we assume that the reward model is fixed across environments, while the transition model may vary. As we will use interventions to define shifted environments, we assume that a structural causal model augments the MDP (see \citep{buesing2018woulda} for a formal definition).
We have a batch of trajectories obtained using a policy $\mu : \cS \to \Delta(\cA)$.
Our goal is to evaluate \textit{robust} value for a given policy $\pi$, in a new environment with unknown transition dynamics. Hence, the uncertainty set for each state-action pair is defined over $P(s'|s, a)$, 
denoted by $\mathcal{U}(s,a)$.\footnote{We drop the dependence on the train environment's transition probabilities for conciseness and use $P$ to denote target probabilities, instead of $Q$, to avoid confusion with the $Q$-value function.} Specifically, we want to estimate the robust value for $\pi$ starting from $s_0$ as $V^\pi(s_0) = \inf_{P\in\mathcal{U}(s,a)} \E_{\pi,P}\left[\sum_{t=0}^\infty \gamma^t r(s_t,a_t)\vert s_0\right]$.
~\citet{iyengar2005robust} proves that $V^\pi(\cdot)$ is the solution to the following fixed-point equation (namely, robust Bellman equation) if we assume that uncertainty sets for each state-action pair are constructed independently (known as \textit{SA-rectangularity}),
\begin{equation}
    \label{eq:robust_bellman}
    \begin{aligned}
        V^\pi(s) = r(s,\pi(s)) + \inf_{P\in\mathcal{U}(s,\pi(s))} \gamma \E_{s'\sim P(s'\vert s,\pi(s))}[V^\pi(s')]
\end{aligned}
\end{equation}
This can be solved iteratively, i.e. by dynamic programming~\citep{sutton2018reinforcement}. Given the value function at any iteration, we additionally have to solve 
the minimization problem over $\mathcal{U}(s,a)$.
Thus, the robust OPE in MDPs reduces to solving multiple DRO problems where we have different choices for the uncertainty set.

\paragraph{Robust OPE in MDPs with \ouralg.}
Past work has only considered divergence-based sets \citep{iyengar2005robust,tamar2014scaling,petrik2019beyond}.
In contrast, we consider sets based on the shifts specified in the causal graph for the MDP, Figure \ref{fig:sel_mdp}.
At any timestep $t$, the states are partitioned into two sets - direct children of the selection node and the rest.
Suppose, $(s^{1}_t,s^{2}_t)$ are the $2$ feature sets of the state vector $s_t$. We deduce that $P(s^{1}_{t+1}\vert s_t, a_t)$ shifts across environments but $P(s^{2}_{t+1}\vert s_t, a_t, s^{1}_{t+1})$ is the same as that in the train environment (which we denote by $P_{0}(s^{2}_{t+1}\vert s_t, a_t)$). Thus, the uncertainty set needs to be defined only for $P(s^{1}_{t+1}\vert s_t, a_t)$, denoted by $\U^{\text{MDP}}(s,a)$:
\begin{align*}
     \U^{\text{MDP}}(s,a) := \{&P(\cdot\vert s,a)\ll P_0(\cdot\vert s,a)\ \text{s.t.} \\
     &\forall s', P(s'\vert s,a)=\nu(s'^{1}\vert s, a)P_0(s'^{2}\vert s, a, s'^{1}), \\ &\infdiv{\nu(s'^{1}\vert s, a)}{P_0(s'^{1}\vert s, a)}\leq \delta\}
\end{align*}
With $\U^{\text{MDP}}$, the DRO subproblem in Eq.~(\ref{eq:robust_bellman}) is reduced to,
\begin{align*}
    &\inf_{P\in\mathcal{U}^{\text{MDP}}(s,\pi(s))} \E_P[V^\pi(s')]\\
    &= \inf_{P\in\mathcal{U}^{\text{MDP}}(s,\pi(s))} \E_{P(s'^{1}\vert s, \pi(s))} \left[\E_{P_0(s'^{2}\vert s, \pi(s), s'^{1})}\left[V^\pi(s')\right]\right]\\
    &=: \cR_{\mathcal{U}^{\text{MDP}}(s, \pi(s))} \left(V^\pi(s')\right)
\end{align*}
We estimate the inner expectation with Monte-Carlo averaging on batch data, followed by solving the DRO problem as done in bandits, Eq.~(\ref{eq:robust_ope_cb}). Here, we use the maximum likelihood estimate of the transition model $P_0$ as we do not have access to the true model (see details in Appendix \ref{app:algmdp}).
 
\section{Experiments and Results}\label{sec:expts}
We comprehensively evaluate the proposed sets in simulation studies with synthetic and real data for the three problems described in Sections \ref{sec:superlearn_intersect}, \ref{sec:offpolicy}, and \ref{sec:fullmdp}. 
In Section \ref{sec:exp_superlearn}, we consider covariate shifts in supervised learning and demonstrate the loss in utility with divergence sets as compared to intersection sets. Additionally, we show the drawbacks of the causal approach. Next, in Section \ref{sec:exp_bandit}, we consider the contextual bandits setup and compare methods that account for partial feedback and covariate shifts.
We show that our method provides more faithful estimates for efficacy of a drug dosing policy under patient population shifts. In Section \ref{sec:exp_fullmdp}, we consider sequential decision problems under shifts in environment dynamics. Within a simulated Sepsis environment, we show that the proposed intersection sets provide significantly less conservative value estimates for a treatment policy.
Experiment setup, baselines, and results are discussed separately for each problem.

\subsection{Robust Supervised Learning}
\label{sec:exp_superlearn}
The data generating process for this synthetic example is modelled after~\citep{duchi2019distributionally}.
There are two covariates, a cause $X_c$ and an effect $X_e$ of the outcome $Y$. 
The structural equations are as follows,
\begin{align*}
    E&\sim \text{Bernoulli}(\{0,1\},\delta_0),
    X_c\sim \begin{cases}
        -4 + \cN(0,2), \text{if}\ E=0\\\
        4 + \cN(0,2), \text{if}\ E=1
    \end{cases}\\
    Y&\sim \begin{cases}
    0.125 X_c^2 + \cN(0,0.1), & \text{if}\ X_c\leq 0\\
    0.125 X_c^2 + \cN(0,2), & \text{if}\ X_c > 0
  \end{cases},\\
  X_e&\sim 0.125 Y^2 + \cN(0,8).
\end{align*}
There are two groups in the dataset, $X_c\geq 0$ and $X_c< 0$, with different outcome functions. Selection node $E$ controls the relative proportion of each group. In training data ($n=2000$), the group with $X_c< 0$ are in minority. The test data ($n=2000$) has increasing proportion of minority group as $\delta_0$ varies from $0.2$ to $0.9$. 

\begin{figure}[htbp!]
\centering
\includegraphics[scale=0.35]{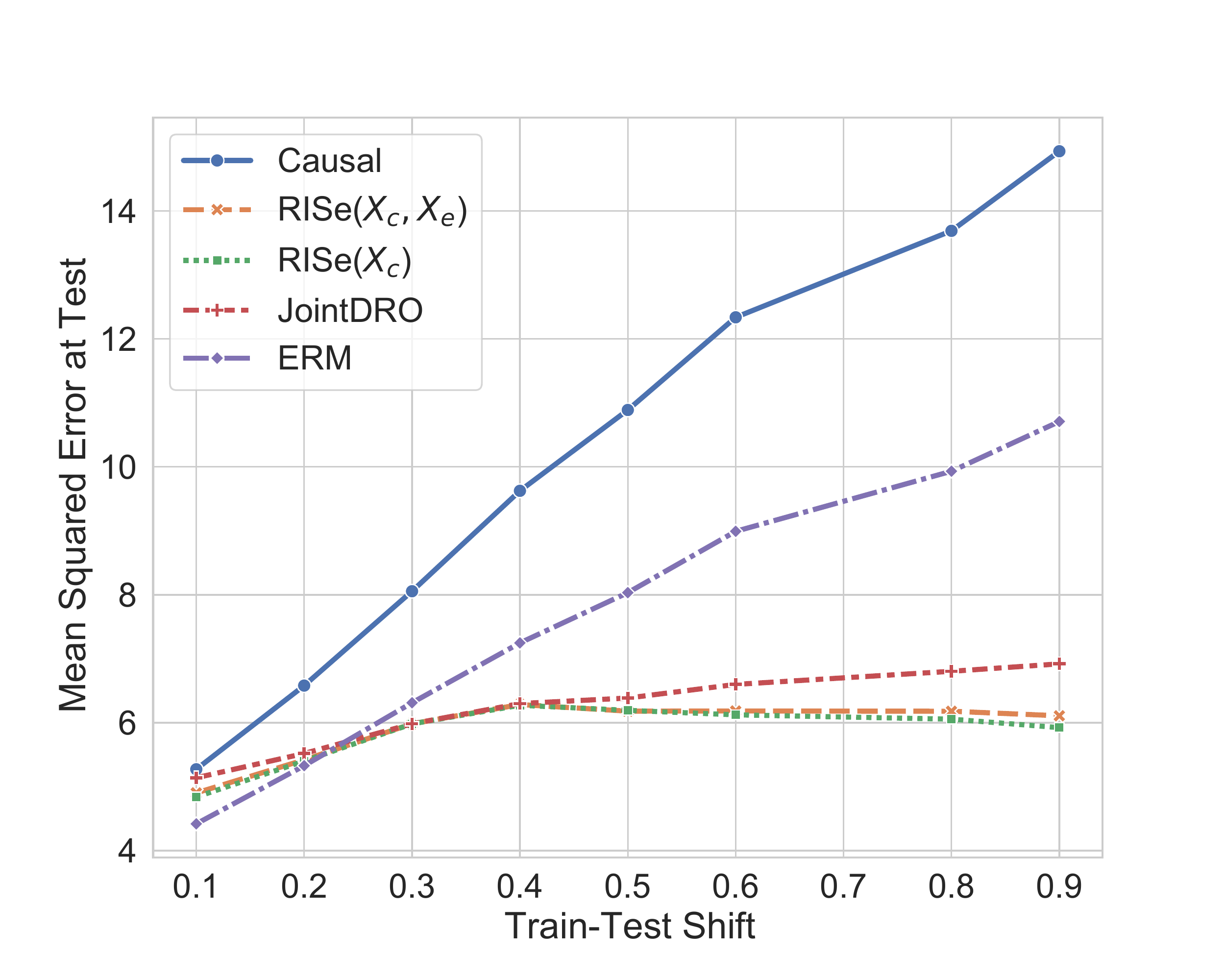}
\caption{\textbf{Robust supervised learning under covariate shifts.} MSE on test sets (y-axis) with varying levels of shift in $X_c$ (x-axis) is shown. 
We observe that solving DRO with bounded shifts in $X_c$ alone results in a model that performs well across different shift magnitudes.}
\label{fig:example_dro}
\end{figure}We compare with empirical risk minimization ($\erm$) $\E[Y\vert X_c, X_e]$, causal solution $\E[Y\vert pa(Y)] \equiv \E[Y\vert X_c]$, and $\divalg$ \citep{duchi2018learning} which considers bounded shifts on \textit{all} variables $Y, X_c, X_e$. 
Since the shift is only on $X_c$, our approach fits models which are robust to bounded shifts in $X_c$ using the estimator (\ref{eq:marginal_smooth_app}), \ouralg$(X_c)$. We also include our method run with shifts in both $X_c,X_e$, namely \ouralg$(X_c,X_e)$.
For all methods, we fit linear models with square loss and no bias term. We choose a misspecified model to illustrate the effect of misspecification on the generalization of the causal solution compared to other methods. In this graph, $\erm$ ($\E[Y\vert X_c, X_e]$) can also be seen as a causal approach given by another robust learning method, called \textit{surgery estimator} \citep{subbaswamy2019preventing}. All hyperparameters including learning rate \texttt{lr}, Lipschitz constant \texttt{L}, and training-time robustness level $\delta$ are optimized separately for each test environment and for each method (details are in Appendix~\ref{app:expts_sl}). In practice, we cannot tune the methods for each test environment as we only have access to one. We do so in this synthetic setup to evaluate the best versions of each method.

Figure \ref{fig:example_dro} shows the MSE for each method on test sets with increasing minority group proportion. Note than $\erm$ performs the best when test-train shifts are low, as expected. However, $\erm$ is not robust as test error increases with more shift. Among the DRO methods, we observe that $\divalg$ is overly conservative, i.e. has high error, as it aims to be robust to shifts in all variables. It is followed by $\ouralg(X_c,X_e)$, and then $\ouralg(X_c)$ as we successively decrease the uncertainty set sizes while still capturing the actual shifts.
Although the causal solution ($\intvalg$) should be robust to shifts on $X_c$, but due to model misspecification, the error is higher than the other approaches and increases as minority group proportion increases. Thus, the proposed approach achieves low test error for both small and large shifts, whereas other approaches are either overly conservative ($\divalg$) or are not robust to model misspecification ($\intvalg$).

\begin{figure}[tbp!]
\centering
\includegraphics[width=0.6\linewidth]{{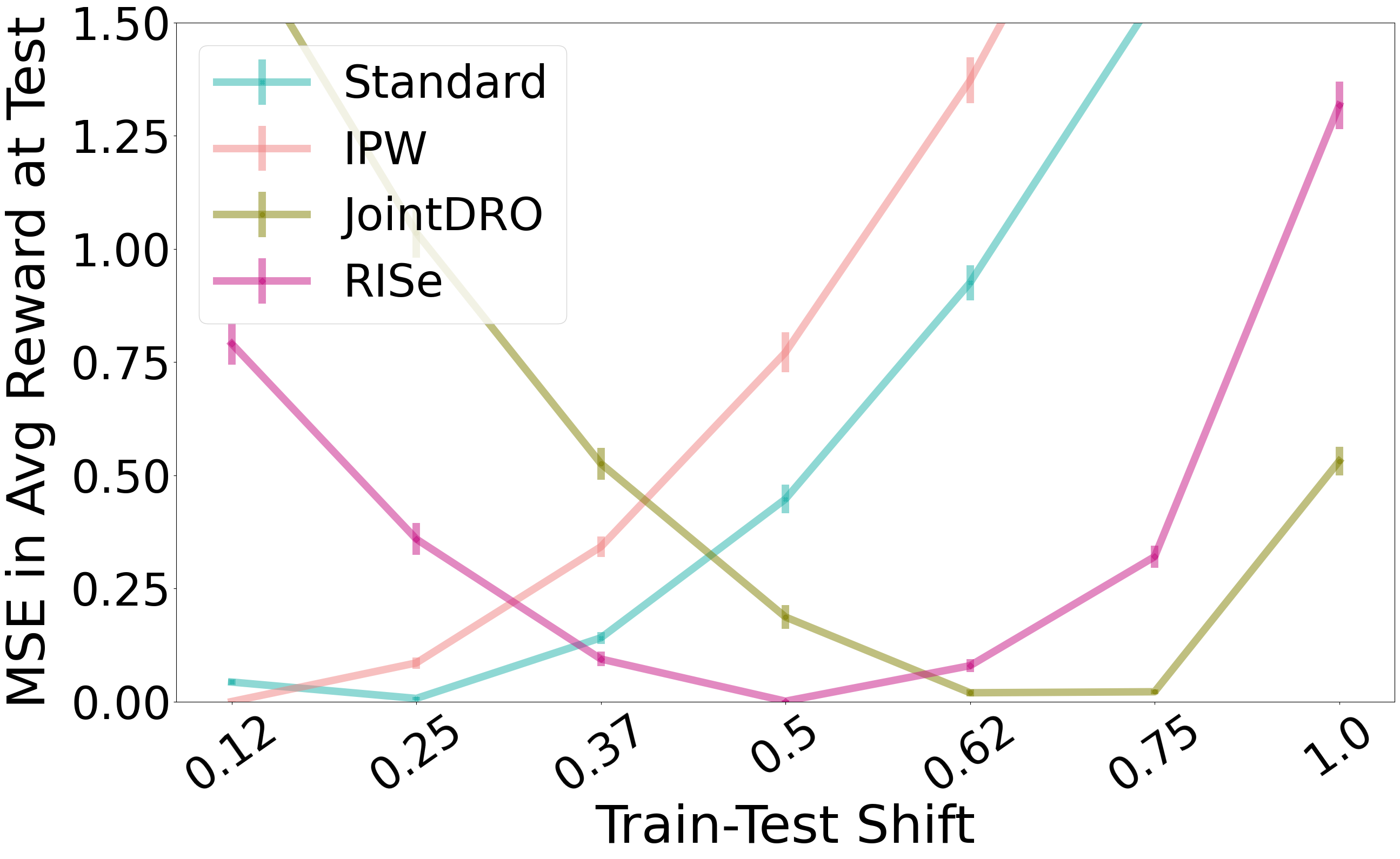}}
\caption{\textbf{Robust OPE in the contextual bandit setting in Figure \ref{fig:fig1}(a).} MSE in the value estimate for test environments (y-axis) with varying levels of shift in $Z_1$ (x-axis) is shown. Robustness level in (\ref{eq:int_set}) is set to $\delta=0.8$. Our approach performs well for moderate shifts. Mean and error bars are computed over $5$ random initializations.}
\label{fig:synth_cb}
\end{figure}
\subsection{Robust OPE in Bandits}
\label{sec:exp_bandit}

\paragraph{Synthetic data}
We generate data according to the causal graph in Figure \ref{fig:fig1}(a) with two features $Z:=(Z_1, Z_2)$ in the context, binary treatment $T$ and continuous outcome $Y$. Structural equations are:
\begin{align*}
& E\sim {\small{\text{Bernoulli}}}(-1,1,\delta_0),Z_1\sim E\cdot \cN(10,1), 
 Z_2\sim \cN(5,1)\\
    &Y_{T=t}\sim \cN(Z^\top w_t,{0.1}^2), t\in\{0,1\}\\
    & \text{where } w_0=[0.1,0.1], w_1=[0.1,0.5] \\
&T\sim \text{Bernoulli}(\sigma(Z^\top\beta + \beta_0)), \beta=[0.1,0.1]
\end{align*}
where $\sigma(z)=1/(1+e^{-z})$. The bias term in the train policy is $\beta_0=-1$, whereas the policy to be evaluated in test environments has $\beta_0=-0.5$. In addition, the marginal distribution of $Z_1$ is changed, via change in $E$, by increasing $\delta_0$ in the test environment. We simulate $n=2000$ samples in the train environment and use it for all methods. The objective is to estimate the robust value of the new policy. 

We compare the proposed method with the following baselines:
$\stdalg$ returns the average value assuming no shift.
{\small{\texttt{Inverse Probability Weighting (IPW)}}} only corrects for shift in policy using importance sampling. $\divalg$ accounts for shifts in all variables $V$ and is therefore more conservative than the proposed method.~\citet{si2020distributional} recently proposed $\divalg$ using $f$-divergences for contextual bandits .

In Figure \ref{fig:synth_cb}, we plot the MSE between the estimated and the true policy value, evaluated using $n=20000$ samples from the test environment. We observe that when the test environment is close to the train one, not accounting for the shift ({$\stdalg$}) performs well. But, as the shift increase, our approach does better. With large shifts, larger uncertainty sets are required. Thus, {$\divalg$} does better than the other methods. In summary, the proposed approach performs well when the shift is significant but not too large. This highlights the importance of choosing the desired robustness level appropriately which is a challenging problem for DRO methods in general.
\begin{figure}[tbp!]
\centering
\begin{subfigure}[t]{0.8\textwidth}
  \centering
  \includegraphics[width=0.8\textwidth]{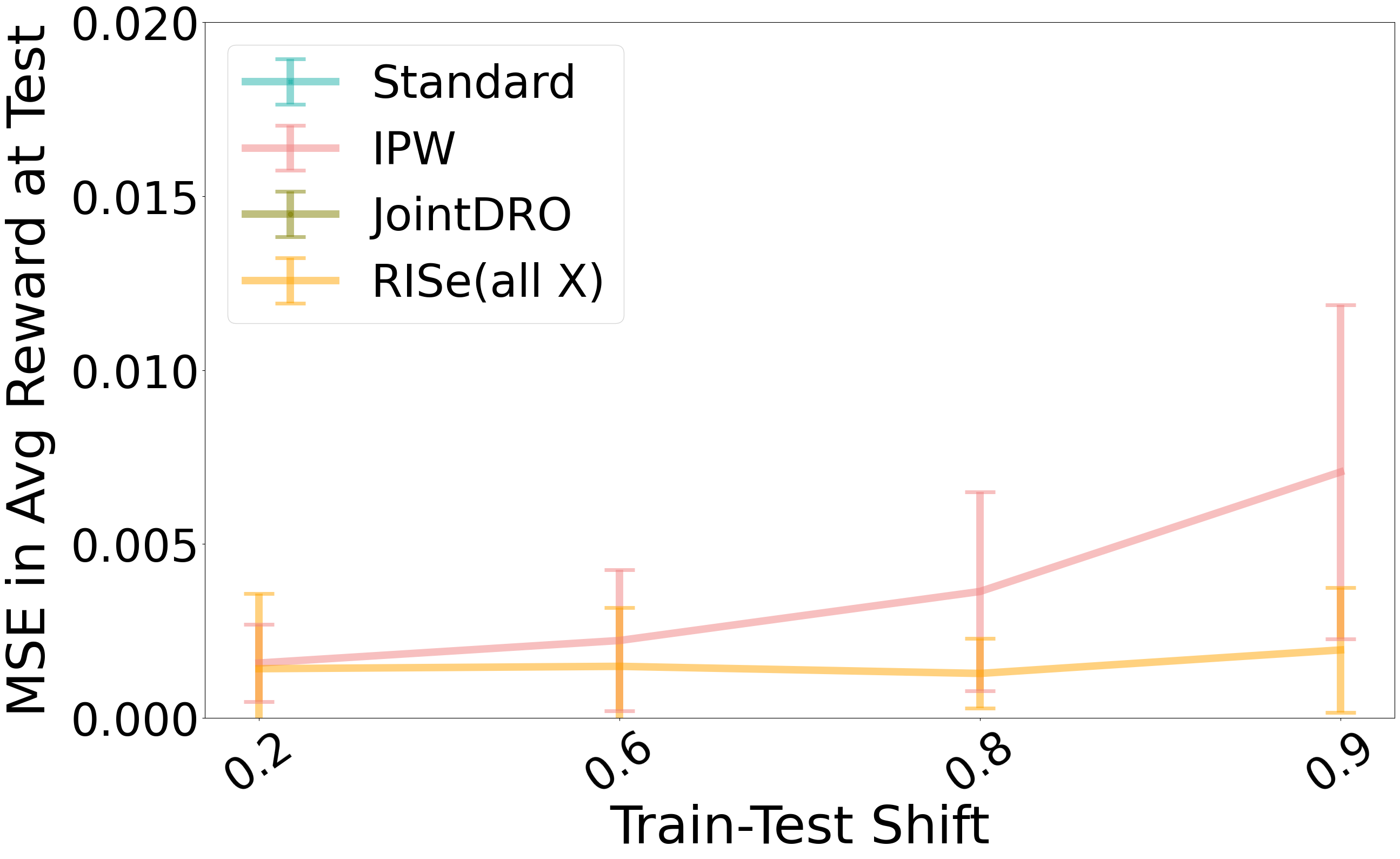}
\end{subfigure}
\caption{\textbf{Robust OPE for Warfarin dosing.} MSE in value estimate at test sets with shift in race distribution is shown. Robustness level in (\ref{eq:int_set}) is set to $\delta=0.8$. Our approach achieves the right level of conservatism to match the value at test. Mean and standard deviation for error bars are computed over $5$ random initializations. Curves for $\stdalg$ and $\divalg$ are not visible as they have high error (around 0.1 and 0.7, respectively) and lie outside the plotted y-axis range.}
\label{fig:wf_cb}
\end{figure}

\paragraph{Warfarin dosing policy.} Warfarin is an oral anticoagulant drug. Optimal dosage to assign to a patient while initiating Warfarin therapy has been a subject of multiple clinical trials \citep{heneghan2010optimal}. Using the publicly available PharmGKB dataset~\citep{international2009estimation} of $5528$ patients, \citet{bastani2015online} learn contextual bandit policies that adapt doses based on patient covariates including demographics, clinical and genetic information. 
However, the performance of the policies is suspect when applied to different patient populations than the development cohort.
We construct robust estimates of a given policy's accuracy in making correct dosing decisions under shifts in race distribution. 
We note that the optimal dose for each patient is also available in the dataset which enables evaluating bandit algorithms. Using true outcomes from a held-out dataset, we learn the optimal dosing policy with linear regression. We evaluate this policy on test data with different race distributions. Specifically, we successively subsample (without replacement) fewer patients with a recorded race into our analysis set. The policy has lower performance on the population with \texttt{Unknown} race. Thus, the value of the policy decreases with increasing shift as the relative proportion of this group increases. To estimate this value correctly, the robust method has to consider the right level of conservativeness.

Figure \ref{fig:wf_cb} shows the MSE between the estimated average reward by each method and the actual average reward evaluated on the shifted test set. We observe that $\ouralg$ achieves lower MSE than $\divalg$ and \texttt{IPW}, as it constructs the uncertainty sets for covariate shifts alone. \subsection{Robust OPE in MDP}
\label{sec:exp_fullmdp}
\paragraph{Cliffwalking domain.} We consider a $6\times6$ gridworld with start, goal positions, and a cliff on one edge \citep[][Ex. 6.6]{sutton2018reinforcement}. The agent incurs a rewards $-100$ on falling off the cliff, $-1$ for each step, and $0$ on reaching the goal. With a constant shift probability, the agent slips down one step towards the cliff instead of taking the prescribed action. This shift probability varies across environments, thus, changing the transition dynamics,  necessitating robustness in policy evaluation.
We consider two state features in the observation space. First feature is the agent's grid position, second is a discrete random noise sampled uniformly from $\{1,\dots,36\}$ with uniformly sampled rewards over [-1,0]. We evaluate the value function estimate for an agent following uniform random policy using dynamic programming with the standard Bellman equation ($\stdalg$) or the robust one in Eq. (\ref{eq:robust_bellman}) ($\divalg,\ouralg$). 

Since agent's actions affect only the first feature, $\ouralg$ correctly constructs uncertainty sets based on transition dynamics in the first feature alone, $P(s^1\vert s,a)$. In contrast, $\divalg$ ignores this structure and constructs uncertainty sets using both the features, $P(s^1,s^2\vert s,a)$. Table \ref{tab:cliff_dp} reports the value estimates for the start position, repeated over $10$ random runs. We observe that for high level of desired robustness $\delta=0.4$, $\divalg$ decreases the value from $-1136$ to $-1448$ (by 27.4\%) while $\ouralg$ only decreases it to $-1416$ (by 24.6\%). 
This provides a sanity check that both DRO methods have the expected behavior in the MDP. 

\begin{table*}[htbp!]
\centering
\begin{tabular}{p{1cm}c|cccc}
\toprule
      &    $\stdalg$ & \multicolumn{2}{c}{$\divalg$ } & \multicolumn{2}{c}{$\ouralg$} \\
      $\delta$ &          - &          0.4 &          0.8 &          0.4 &          0.8 \\
\midrule
mean & -1136.43 & -1448.16 & -1221.78 & \textbf{-1416.39} & \textbf{-1190.57} \\
std. &     6.22 &     6.32 &     5.28 &     6.70 &     6.33 \\
median & -1136.64 & -1449.91 & -1222.76 & -1417.12 & -1190.92 \\
\bottomrule
\end{tabular}
\caption{\textbf{Cliffwalking domain.} Estimated value for a uniform random policy with standard and robust dynamic programming. \ouralg~provides a less conservative value estimate than the divergence-based method \divalg~(i.e. the decrease in value is smaller). Robust methods use $\divcvar$ based sets with $\delta\in\{0.4,0.8\}$. Mean, standard error, and median reported with 10 random initializations.}
\label{tab:cliff_dp}
\begin{tabular}{p{1cm}c|cccc}
\toprule
    &   $\stdalg$    &  $\divalg$ & $\ouralg$\\
  $\delta$  &   -       &   0.8     &    0.8   \\
\midrule
mean & -0.037 & -0.939 &   \textbf{-0.662} \\
std. &  0.008 &  0.006 &    0.148 \\
median & -0.039 & -0.941 &   -0.705 \\
\bottomrule
\end{tabular}
\caption{\textbf{Sepsis domain.} Estimated value for the evaluation policy with standard and robust dynamic programming. \ouralg~provides a less conservative value estimate compared to \divalg. Robust methods use $\divcvar$ based sets with $\delta=0.8$. Mean, standard error, and median reported with 10 random initializations.}\label{tab:sepsis_dp}
\end{table*}

\paragraph{Sepsis treatment evaluation.}
This domain has a more involved transition dynamics. The Sepsis simulator~\citep{oberst19counterfactual}, has been used to test sequential treatment policies \citep{oberst19counterfactual, namkoong2020off, killian2020counterfactually,futoma2020popcorn}. The state space is discrete and includes $4$ vital signs (blood pressure, glucose levels, heart rate, and oxygen concentration) and diabetic status, with a total of 1440 states. Actions correspond to combinations of 3 treatments
(antibiotics, mechanical ventilation, and vasopressors). Terminal states are discharge from ICU or death with rewards $+1$ and $-1$, respectively. Note that, glucose levels fluctuate more for diabetics than non-diabetics. Our goal is to evaluate a learned policy, namely RL policy, obtained using policy iteration on a finite dataset sampled from an environment with $20\%$ diabetics. 

We consider a setting where the percentage of diabetics and the fluctuation in their glucose levels varies in the test environment. For interrogating the RL policy for possible deployment, we would want to find its robust value accounting for potential changes in the transition dynamics. $\divalg$ constructs uncertainty sets based on the full 1440 states, while $\ouralg$ considers uncertainty only in glucose level dynamics for diabetics and non-diabetics. Thus, $\ouralg$ represents the actual shifts more faithfully compared to $\divalg$. The goal is to check how conservative the divergence based sets are compared to the intersection ones. Table \ref{tab:sepsis_dp} reports the value estimates for the RL policy obtained with standard and robust dynamic programming. We observe that $\divalg$ reports a decrease in value by 21 times as compared to the value at train environment and $\ouralg$ only reports a decrease in value by 14 times. Thus, we demonstrate the benefit of curating the uncertainty sets using domain knowledge to balance utility and robustness. 
\section{Conclusions and Future Work}
Distribution shifts have been represented in past work either using divergence metrics or intervention shifts in the underlying causal models. 
Here, we propose to represent shifts at the intersection of the two approaches. We argue that this provides ways to represent more realistic shifts and to find robust solutions which are less conservative. Under certain conditions, the robust solutions can be obtained efficiently for such shifts. For the supervised learning case, we provide a causal interpretation of an existing method \citep{duchi2019distributionally}. We extend this to the case of OPE in contextual bandits and RL to provide a novel estimator for robust OPE.
Interesting future directions include exploring relaxations of Assumptions \ref{assum:inter}(a) by restricting the functional form of the causal relationships to identify the worst-case risk in the presence of unobserved confounding. Assumption \ref{assum:inter}(b) can be relaxed by developing techniques for solving DRO for shifts other than in marginals (e.g. conditional distributions). This will enable expressing and learning robust models for a larger and more realistic class of distribution shifts. %
 
\bibliography{robust.bib}
\bibliographystyle{plainnat}

\pagebreak

\appendix
\section{Appendix}\label{app:rslduchi}

\subsection{Proof of Proposition~\ref{sec:prop1}}\label{app:prop1}

Recall that,
\begin{equation*}
\begin{aligned}
    \Uint_P := \{&Q\ll P\ \text{s.t.}\ Q=P(V\mid do(Z\sim \nu(Z))), \\ &\infdiv{\nu(Z)}{P(Z)} \leq \delta, Z:=ch(E)\}.
\end{aligned}
\end{equation*}
\begin{proof}
By Assumption \ref{assum:inter}, we can factorize the joint distribution as, $P(\V) = P(Z)\prod_{O\in \V\setminus Z} P(O\vert pa(O))$, for any variables $Z$ with no parents. Further, the soft intervention on $Z$, denoted by $do(Z \sim \nu(Z))$, by definition, implies replacing the original marginal $P(Z)$ by $\nu(Z)$, s.t. $\infdiv{\nu(Z)}{p(Z)} \leq \delta$. Thus, the joint distribution can be written as, $P(\V) = \nu(Z)P(V\setminus Z\vert Z)$, where we collected the factors not involving $Z$ into $P(V\setminus Z\vert Z)$.
\end{proof}

\subsection{Equivalence of Marginal DRO \citep{duchi2019distributionally} and $\ouralg$ ($\Uint_{P}$) with $\divcvar$}\label{app:irslset}

In Marginal DRO \citep{duchi2019distributionally}, the uncertainty set is built in such a way that the training population contains at least $\delta$ proportion of the test population.
\begin{equation}
\label{eq:marginal_set_sub_app}
\begin{aligned}
 \mathcal{U}^\text{sub}_P := \{&Q(Z)P(V\setminus Z\vert Z)\ \text{s.t.} \\ &P(Z) = \alpha' Q(Z) + (1-\alpha') Q'(Z), \alpha \leq \alpha' \leq 1\}
\end{aligned}
\end{equation}
where $Q'(Z)$ is any distribution and $\alpha \in (0,1]$ determines the minimum size of the subpopulation shared between train and test.
Thus, $\mathcal{U}^\text{sub}_P$ constrains the ratio $\frac{Q(Z)}{P(Z)}\leq \frac{1}{\alpha}$ for all values of $Z$. 
Note that the same constraint is expressed in $\Uint_P$ when using $\infdivcvar{Q}{P} = \log{\sup_{Z\in \texttt{dom}(Z)}} \frac{Q(Z)}{P(Z)}\leq \delta$ for a suitably chosen $\delta$. Thus, the two sets are equivalent.

\subsection{More Details on Solving DRO}
The DRO problem for $\Uint_P$ solves,
\begin{equation}
\label{eq:robustint_app}
\argmin_{\theta\in \Theta} \left\{\R(\theta, \Uint_P):= \sup_{Q\in\ \Uint_P}\ \E_{\V\sim Q}[\ell(\theta, \V)]\right\}
\end{equation}
Using convex duality arguments \citep{shapiro2014lectures}, one can write the worst-case risk alternatively as,
\begin{equation}
\label{eq:robust_marginal}
\begin{aligned}
    \R(\theta, \Uint_P) &= \sup_{Q\in\ \Uint_P}\ \E_{Z\sim Q(Z)}\E_{P(V\vert Z)}[\ell(\theta,V)\vert Z] \\ &= \inf_{\eta\in \reals} \frac{1}{\delta}\E_P\left[(\E_P\left[\ell(\theta,V)\vert Z\right]-\eta)_+\right] + \eta
\end{aligned}
\end{equation}
where $(\cdot)_+ = \text{max}(\cdot, 0)$. 
Note that this requires estimating $\E_{P(V\vert Z)}[\cdot]$ which may be hard if $Z$ is continuous-valued or we do not have enough data for all possible values of $Z$. 
Assuming smoothness of this conditional loss, \citet[][Lemma 4.2]{duchi2019distributionally} gives an upper bound for the worst-case risk with the empirical version as, 

\begin{align}
\begin{split}
    \label{eq:marginal_smooth_app}
    \widehat{\R}(\theta, \Uint_P)&  \\ 
    = \inf_{\substack{\eta\geq 0,\\ B\in \reals^{n\times n}_+}}  \Bigg\{& \frac{1}{\delta} \Big(\frac{1}{n}\sum_{i=1}^n\Big(\ell(\theta, V_i) {-} \frac{1}{n}\sum_{j=1}^n(B_{ij}{-}B_{ji}) {-} \eta\Big)^2_+\Big)^{\frac{1}{2}} \\&+ \frac{L}{\epsilon n^2}\sum_{i,j=1}^n \sum_{O\in Z}\Vert O_i-O_j\Vert B_{ij} + \eta \Bigg\} \end{split}
\end{align}

for any $\epsilon>0$, where $\eta, B$ are dual variables. We use this estimator while solving the DRO problems in supervised learning and bandit experiments. The minimization is performed using gradient descent as the objective is convex in both $\eta$ and $B$.

\subsection{Justification for Using the Term Intersection}
In this result, we show that the proposed set $\Uint_P$ is indeed an intersection of the two sets, $\Udiv_P$ and $\Uintv_P$, under the class of $f$-divergence metrics which include many commonly-used metrics like KL-divergence, Total Variation, and CVaR \citep{liese2006divergences}.

Recall that,
\begin{align*}
    \Udiv_P &:= \{Q\ll P\ \text{s.t.}\ \infdiv{Q}{P}\leq \delta\},\\
    \Uintv_P &:= \{Q\ll P\ \text{s.t.}\ Q=P(V\mid do(X\sim \nu(X))), \\ &X:=ch(E)\},\\
    \Uint_P &:= \{Q\ll P\ \text{s.t.}\ Q=P(V\mid do(X\sim \nu(X))), \\ &\infdiv{\nu(X)}{P(X)} \leq \delta, X:=ch(E)\}.
\end{align*}

\begin{proposition}
Given Assumption \ref{assum:inter} holds and $\cD$ is a $f$-divergence, then $\Udiv_P \cap \Uintv_P \equiv \Uint_P$.
\end{proposition}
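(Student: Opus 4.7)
The plan is to establish the set equality by proving both inclusions $\Uint_P \subseteq \Udiv_P \cap \Uintv_P$ and $\Udiv_P \cap \Uintv_P \subseteq \Uint_P$. The crux in both directions is a ``chain-rule'' style identity for $f$-divergences: whenever $Q$ and $P$ share every conditional factor except the marginal on $X$, then $\infdiv{Q}{P} = \infdiv{Q(X)}{P(X)}$. Once this identity is in hand, both inclusions are short and symmetric, and the proof is essentially a bookkeeping exercise on top of Proposition~\ref{sec:prop1}.

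First I would establish the chain-rule identity. By Assumption \ref{assum:inter}(a)--(b), the joint factorizes as $P(V) = P(X)\,P(V \setminus X \mid X)$, and any $Q$ arising from a soft intervention $do(X \sim \nu(X))$ likewise factorizes as $Q(V) = \nu(X)\,P(V \setminus X \mid X)$ (this is precisely the content of Proposition~\ref{sec:prop1}). Since $Q \ll P$ forces $\nu \ll P(X)$, the Radon--Nikodym derivative simplifies to $dQ/dP = \nu(X)/P(X)$ and is measurable with respect to $X$ alone. Writing $\infdiv{Q}{P} = \int f(dQ/dP)\,dP$ for the generator $f$ of the $f$-divergence, Fubini lets us integrate out $V \setminus X$ against $P(V\setminus X \mid X)$, leaving $\int f\bigl(\nu(X)/P(X)\bigr)\,P(X)\,dX = \infdiv{\nu(X)}{P(X)}$.

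For the forward inclusion, take $Q \in \Uint_P$. Then $Q \in \Uintv_P$ immediately by definition. By Proposition~\ref{sec:prop1}, $Q = \nu(X)\,P(V\setminus X \mid X)$ for some $\nu$ with $\infdiv{\nu(X)}{P(X)} \leq \delta$, and the chain-rule identity yields $\infdiv{Q}{P} = \infdiv{\nu(X)}{P(X)} \leq \delta$, so $Q \in \Udiv_P$. For the reverse inclusion, take $Q \in \Udiv_P \cap \Uintv_P$. Membership in $\Uintv_P$ combined with Proposition~\ref{sec:prop1} supplies $\nu$ such that $Q = \nu(X)\,P(V\setminus X \mid X)$. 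The identity then gives $\infdiv{\nu(X)}{P(X)} = \infdiv{Q}{P} \leq \delta$, the final inequality coming from $Q \in \Udiv_P$. Hence $Q \in \Uint_P$.

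The main obstacle is the chain-rule identity itself: it relies crucially on $\cD$ being an $f$-divergence, since the argument proceeds by recognizing that the likelihood ratio $dQ/dP$ is a function of $X$ alone and then integrating the unchanged coordinates out. For divergences that are not $f$-divergences (e.g., certain integral probability metrics), the analogous reduction need not hold, which is why the proposition explicitly restricts to the $f$-divergence family. A subtle but routine point is to verify the absolute-continuity conditions: $Q \ll P$ in the joint sense is equivalent to $\nu \ll P(X)$ under the shared-conditional factorization, so every step remains well-defined.
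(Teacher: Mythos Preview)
Your proposal is correct and follows essentially the same route as the paper: both directions hinge on the identity $\infdiv{Q}{P} = \infdiv{\nu(X)}{P(X)}$ whenever $Q$ and $P$ share the conditional $P(V\setminus X\mid X)$, which holds because for an $f$-divergence the integrand depends only on the ratio $dQ/dP = \nu(X)/P(X)$. Your write-up is in fact slightly more explicit than the paper's (you invoke Fubini and the Radon--Nikodym derivative where the paper simply says the shared factor ``can be omitted''), but the argument is the same.
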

\begin{proof}
We can write the sets using Assumption \ref{assum:inter} as,
\begin{align*}
    \Uintv_P &:= \{Q\ll P\ \text{s.t.}\ Q=\nu(X)P(V\setminus X\mid X), \\ &\infdiv{\nu(X)}{P(X)} \rightarrow \infty\}\\
    \Uint_P &:= \{Q\ll P\ \text{s.t.}\ Q=\nu(X)P(V\setminus X\mid X), \\ &\infdiv{\nu(X)}{P(X)}\leq \delta\}.
\end{align*}

To show the equivalence of the sets, consider the two cases.

\textit{Case 1.} To prove, $Q\in \Uint_P \implies Q\in \Udiv_P\cap \Uintv_P$.

Assume $Q\in \Uint_P$.

Since $\infdiv{\nu(X)}{P(X)}\leq \delta < \infty$, this implies $Q\in\Udiv_P$ .

Also, for $Q\in \Uint_P$, we can write $\infdiv{Q}{P}$ as,
\begin{align*}
    &\infdiv{Q}{P}\\
    &= \infdiv{\nu(X)P(V\setminus X\mid X)}{P(X)P(V\setminus X\mid X)}\\
    &= \infdiv{\nu(X)}{P(X)}
\end{align*}
Here, the factor $P(V\setminus X\vert X)$ can be omitted because $\cD$ is a $f$-divergence which has the form of a ratio of densities, $\infdiv{Q}{P}=\int f\left(\frac{dQ}{dP}\right)dP$. 

Thus, the divergence between the joint distributions, $\infdiv{Q}{P} = \infdiv{\nu(X)}{P(X)} \leq \delta$. By definition, this implies $Q\in \Uintv_P$. 

Hence, $Q\in \Udiv_P\cap \Uintv_P$.

\textit{Case 2.} To prove, $Q\in \Udiv_P\cap \Uintv_P \implies \Uint_P$

Assume $Q\in \Udiv_P\cap \Uintv_P$. As $Q\in \Udiv_P$, the divergence between the joint distributions is bounded,
\begin{align*}
    &\infdiv{Q}{P}\leq \delta\\
    &\implies \infdiv{Q(X)Q(V\setminus X\mid X)}{P(X)P(V\setminus X\mid X)}\leq \delta
\end{align*}
Since, $Q\in \Uintv_P$ as well, the factors which are not intervened upon remain the same, i.e. $Q(V\setminus X\mid X)=P(V\setminus X\mid X)$. Thus, we can use the property of $f$-divergences, as earlier, to bound the divergence between the marginal distributions,
\begin{align*}
    &\infdiv{Q}{P}\leq \delta\\
    &\implies \infdiv{\nu(X)P(V\setminus X\mid X)}{P(X)P(V\setminus X\mid X)} \leq \delta\\
    &\implies \infdiv{\nu(X)}{P(X)} \leq \delta
\end{align*}
Thus, $Q\in \Uint_P$.

The two cases together imply that $\Udiv_P \cap \Uintv_P \equiv \Uint_P$.
\end{proof}
This justifies the intuition behind $\ouralg$ as an intersection of the divergence and intervention-based sets, as in Fig. \ref{fig:fig1}b.

\section{Case of Correctly Specified Models}
In the supervised learning experiment, we consider the effect of model misspecification on the quality of $\intvalg$ solution, and claim that $\ouralg$ is a better alternative in this case. Here, we investigate the behavior of the methods if, instead, the model is correctly specified. We show for a special case, where there are no variables downstream of $Y$ in the causal graph and loss function is squared loss, the two methods converge to the same solution.

For a prediction function $f(x,\theta)$, define,
\begin{align*}
\erm &:= \min_\theta\ \mathbb{E}_P[(y-f(x,\theta))^2]\\
\ouralg &:= \min_\theta\ \sup_{Q\ \in\ \Uint_P}\ \mathbb{E}_Q[(y-f(x,\theta))^2]\\
\intvalg &:= \min_\theta\ \sup_{Q\ \in\ \mathcal{U}_P^\text{int}}\ \mathbb{E}_Q[(y-f(x,\theta))^2]\\
&=\min_{\theta_c} \mathbb{E}_P[(y-f(x_c,\theta_c))^2],
\end{align*}
where $x_c$ are causal parents of $Y$ \citep{rojas2018invariant}.

By definition,
\begin{align*}
\erm &\leq \ouralg,\ \text{since}\ P\in \Uint_P,\\
\ouralg &\leq \intvalg,\ \text{since}\ \Uint_P \subseteq \mathcal{U}_P^\text{int}.
\end{align*}
If $Y$ has no descendants,
$$\erm=\intvalg = \min_{\theta_c} \mathbb{E}_P[(y-f(x_c,\theta_c)^2],$$
since $x_c$ is Markov blanket of $Y$.
But, as shown earlier, $$\erm\leq\ouralg\leq\intvalg.$$
Thus, $\ouralg=\intvalg$. 

This means that when models are correctly specified $\ouralg$ and $\intvalg$ have the same performance. If the model is misspecified, $\ouralg$ can outperform $\intvalg$ significantly, as observed in the experimental results in Sec \ref{sec:exp_superlearn}.
This adds support for using $\ouralg$ instead of $\intvalg$ when we suspect that models are misspecified. However, a detailed study is required to compare the two methods under different amounts of model misspecification and training data.

\section{Algorithm for Robust OPE for CBs}\label{app:cb_ope}
Procedure for estimating robust average reward in case of contextual bandits is given in Algorithm \ref{alg:cb_ope}.

\begin{algorithm}[!htbp]
\caption{Robust OPE for CBs}
\label{alg:cb_ope}
\begin{algorithmic}
\STATE {\bfseries Input:} Data $\{Z_{i}, T_i, Y_i\}_{i}$, Target policy $Q(T|Z)$, behavior policy $P(T|Z)$, hyperparameters $\delta$, \texttt{L}, \texttt{lr}.
\STATE
\STATE Compute importance weights $\{W_i=\frac{Q(T_i|Z_i)}{P(T_i|Z_i)}\}_i$.
\STATE Create dataset $\{V_i = (Z_{i}, W_i\times Y_i)\}_{i}$.
\STATE Estimate $\widehat{R}(\cU^{\text{CB}}_P)$ (Eq.~(\ref{eq:robust_eval})) with the worst-case risk estimator in Eq. (\ref{eq:marginal_smooth_app}) for the dataset.
\STATE
\STATE{{\bf{return}} $\widehat{R}(\cU^{\text{CB}}_P)$}
\end{algorithmic}
\end{algorithm}

\section{Algorithm for Robust OPE for MDPs}
\label{app:algmdp}
Robust OPE with dynamic programming amounts to solving the following fixed point equation iteratively,
\begin{equation}
    \begin{aligned}
        V^\pi(s) = r(s,\pi(s)) + \inf_{P\in\U^{\text{MDP}}(s,\pi(s))} \gamma \E_{s'\sim P(s'\vert s,\pi(s))}[V^\pi(s')]
\end{aligned}
\end{equation}

At each iteration, we have to solve the DRO problem,
\begin{align*}
    &\inf_{P\in\mathcal{U}^{\text{MDP}}(s,\pi(s))} \E_P[V^\pi(s')]\\
    &= \inf_{P\in\mathcal{U}^{\text{MDP}}(s,\pi(s))} \E_{P(s'^{1}\vert s, \pi(s))} \left[\E_{P_0(s'^{2}\vert s, \pi(s), s'^{1})}\left[V^\pi(s')\right]\right]\\
    &=: \cR_{\mathcal{U}^{\text{MDP}}(s, \pi(s))} \left(V^\pi(s')\right)
\end{align*}
We estimate the inner expectation w.r.t. $P_0$ with Monte-Carlo averaging on batch data as it contains samples from $P_0$. We then compute the value function update using importance sampling with data collected from the policy $\mu$,
\begin{align}
    V^\pi(s) &\leftarrow \frac{\pi(a\vert s)}{\mu(a\vert s)} \left[r(s,a) + \gamma \cR_{\mathcal{U}^{\text{MDP}}(s, a)} \left(V^\pi(s')\right)\right],\nonumber\\
    &a\sim\mu(a\vert s), s'\sim P_0(s'\vert s, a)
\label{eq:robust_mdp_app}
\end{align}
As earlier, we choose the divergence metric $\divcvar$, and accordingly, solve the DRO problem using the form in Eq. (\ref{eq:robust_marginal}). This requires access to the true transition model, $P_0$, and the reward model, $r$. Since we only have samples from $P_0$, we use the maximum likelihood estimate $\widehat{P_0}$ from the batch data in place of $P_0$. For finite state-action space, this amounts to counting transitions for each tuple $(s,a,s')$ and averaging. We assume that the reward model is known.
The procedure is given in detail in Algorithm \ref{alg:cb_mdp}.

\begin{algorithm}[!htbp]
\caption{Robust OPE for MDPs}
\label{alg:cb_mdp}
\begin{algorithmic}
\STATE {\bfseries Input:} Trajectories $\{(s,a,s',r)\}$ sampled using policy $\mu$, Target policy $\pi$, Discount factor $\gamma$, Robustness level $\delta$.
\STATE
\STATE Learn transition model $\widehat{P_0}(s'^1\vert s,a)= \frac{\text{Count}\{(s,a,(s'^1,*))\}}{\text{Count}\{(s,a,(*,*))\}}$ and $\widehat{P_0}(s'^2\vert s,a,s'^1)= \frac{\text{Count}\{(s,a,(s'^1,s'^2))\}}{\text{Count}\{(s,a,(s'^1,*))\}}$.
\STATE Initialize $V^\pi(s)=0$, for all $s\in\cS$.
\REPEAT
\FOR{$s\in\cS$}
\STATE Update $V^\pi(s)$ using Eq. (\ref{eq:robust_mdp_app}) with $\widehat{P_0}$.
\ENDFOR
\UNTIL{$V^\pi$ converges}
\STATE
\STATE{{\bf{return}} $V^\pi$}
\end{algorithmic}
\end{algorithm}

\textbf{Extension to continuous states} Although, the method is specified for finite state MDPs, it can be extended to continuous or large state spaces by leveraging linear function approximations along with robust dynamic programming, as proposed in \citep{tamar2014scaling}. Validating such approximations for $\ouralg$ is a fruitful direction for more work.

\section{Experimental Details}\label{app:expts_sl}

For the supervised learning experiment in Sec. \ref{sec:exp_superlearn}, ranges for hyperparameters are: $\texttt{lr}\in\{0.1,0.2,\dots,0.8\}, \texttt{L}\in \{0.1,1,10,100,1000\}$, and $\delta\in\{0.001,0.01,0.1\}$.

For the supervised learning and CB experiments, gradient descent is performed with Adagrad \citep{duchi11adagrad} implementation in PyTorch \citep{paszke19pytorch}. A preprocessed version of the Warfarin dataset was downloaded from \url{https://github.com/khashayarkhv/contextual-bandits/blob/master/datasets/warfarin.csv}.

For the MDP experiments, convergence condition for $V^\pi$ is taken as maximum change in $V^\pi(\cdot) < 10^{-2}$ after one repetition across all states. The minimization over $\eta$ while solving Eq. (\ref{eq:robust_marginal}) is performed using Brent's method, implemented in the package \texttt{scipy} \citep{scipy}.

Experiments were run on a compute cluster using a single node with a 2.50 GHz Intel processor, 2 GPUs with 6 GB memory each, and 256 GB system memory.

\begin{figure}[htbp!]
\centering
\includegraphics[width=0.7\linewidth]{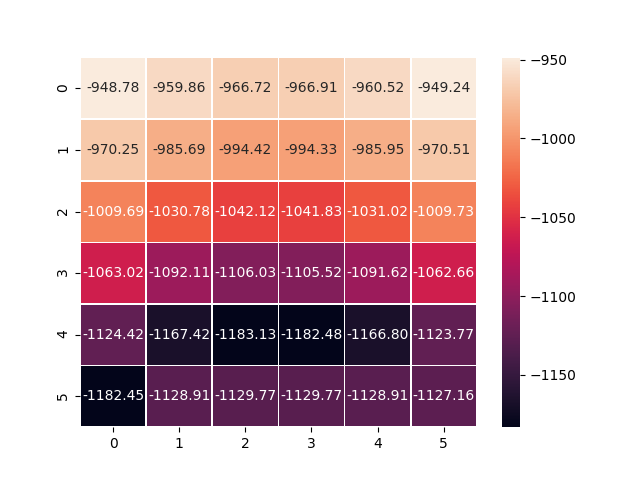}
\caption{\textbf{Illustration of the Cliffwalking domain.} Plot shows (part of) the value function estimated using robust Bellman equation with \ouralg. Start position is (5,0), goal is (5,5), and cliff corresponds to the row (5,0) to (5,5). Agent slips downward by 1 cell with probability 0.1 when taking actions in any of the columns except first and last columns. Results in Table \ref{tab:cliff_dp} report the value at the start position (5,0), which is -1182.45 here, averaged over 10 random initializations of the domain.}
\label{fig:cliff}
\end{figure}

\section{More Details on Cliffwalking Domain}
We consider a $6\times6$ gridworld (Figure \ref{fig:cliff}) with start, goal positions, and a cliff on one edge \citep[][Ex. 6.6]{sutton2018reinforcement}. The agent incurs a rewards $-100$ on falling off the cliff, $-1$ for each step, and $0$ on reaching the goal.
To add stochasticity in transitions, we make two changes to the domain. With a constant shift probability, the agent slips down one step towards the cliff instead of taking the prescribed action. This shift probability varies across environments, thus, changing the transition dynamics, necessitating robustness in policy evaluation.
Second, we consider two state features in the observation space of the agent. The first feature is the agent's position on the grid, and the second is discrete random noise sampled uniformly from the set $\{1,2,\dots,6\times6\}$. The second feature has an associated reward sampled from Uniform$(-1,0)$ for the $6\times 6$ values it takes. Total reward is the sum of rewards from the two features (one based on grid position and one sampled uniformly between -1,0). 

Since agent's actions affect only the first feature, $\ouralg$ correctly constructs uncertainty sets based on transition dynamics in the first feature alone, $P(s^1\vert s,a)$. In contrast, $\divalg$ ignores this structure and constructs uncertainty sets using both the features, $P((s^1,s^2)\vert s,a)$.
We evaluate the value function estimate for an agent following uniform random policy using dynamic programming with the standard Bellman equation ($\stdalg$) or the robust one in Eq. (\ref{eq:robust_bellman}) ($\divalg,\ouralg$).

\section{More Details on Sepsis Domain}
The RL policy to be evaluated is obtained as follows. The optimal policy for the Sepsis simulator, namely the physician policy, is found with the procedure used in \citep{oberst19counterfactual}. Physician policy is made stochastic by taking random action with probability 0.05. With this policy, a dataset with $1000$ trajectories each with maximum length of $20$ is sampled. Diabetic population is fixed to 20\% while sampling. The RL policy used in evaluation is obtained by running policy iteration on this dataset. Then, RL policy is used to sample another dataset with $10000$ trajectories each with maximum length of $20$, and 20\% diabetics. This data is used for the final evaluation. As the policy being evaluated is the same as the one used to sample the available data, the importance weighting done in Eq. (\ref{eq:robust_mdp_app}) is not needed.

\end{document}